\documentclass[11pt]{article}

\usepackage[final]{acl}

\usepackage{eso-pic}
\usepackage{algorithm}
\usepackage{algorithmic}
\usepackage{subcaption}
\usepackage{times}
\usepackage{latexsym}
\usepackage{amssymb}
\usepackage[T1]{fontenc}

\usepackage[utf8]{inputenc}

\usepackage{microtype}

\usepackage{inconsolata}
\usepackage{amsthm}
\newtheorem{definition}{Definition}
\newtheorem{lemma}{Lemma}
\usepackage{graphicx}
\usepackage{float}
\usepackage{dblfloatfix}

\usepackage{multirow} 
\usepackage{booktabs}
\usepackage{colortbl}
\usepackage{xcolor}
\usepackage{amsmath}
\usepackage{pifont}
\newcommand{\cmark}{{\large\ding{51}}}
\newcommand{\xmark}{{\color{red}\ding{55}}}

\title{Nullify: Null-Space Activation Steering for Training-Free LLM Unlearning}

\author{
  \textbf{Wei Zhai}\textsuperscript{1,6}\thanks{Equal contribution.}\quad
  \textbf{Xiang Liu}\textsuperscript{2}\footnotemark[1]\quad
  \textbf{Qiang Huang}\textsuperscript{1}\quad
  \textbf{Rui Qian}\textsuperscript{1}\quad
  \textbf{Lemao Liu}\textsuperscript{1}
  \\
  \textbf{Ziwei Li}\textsuperscript{3}\quad
  \textbf{Ziqi Wang}\textsuperscript{4}\quad
  \textbf{Zhitao Huang}\textsuperscript{5}\quad
  \textbf{Dejing Dou}\textsuperscript{1,6}\thanks{Corresponding author:
    \href{mailto:dejingdou@gmail.com}{\texttt{dejingdou@gmail.com}}}
  \\[3pt]
  \normalfont\large
  \textsuperscript{1}College of Computer Science and Artificial Intelligence, Fudan University
  \\
  \textsuperscript{2}School of Electrical and Information Engineering, Tianjin University
  \\
  \textsuperscript{3}King Abdullah University of Science and Technology (KAUST), Saudi Arabia
  \\
  \textsuperscript{4}School of Software Technology, Zhejiang University
  \\
  \textsuperscript{5}School of Transnational Law, Peking University\quad
  \textsuperscript{6}BEDI Cloud
}

\begin{document}
\AddToShipoutPictureFG*{%
  \AtPageLowerLeft{%
    \put(0,35){%
      \makebox[\paperwidth][c]{%
        \normalfont\fontsize{9}{11}\selectfont\itshape
        Accepted to EMNLP 2026 (Main Conference).%
      }%
    }%
  }%
}
\maketitle
\begin{abstract}
Large Language Models (LLMs) inevitably internalize substantial amounts of sensitive or private information during pre-training, while LLM unlearning aims to selectively erase specific knowledge to prevent privacy leakage with minimal loss of model utility. However, existing methods struggle to balance forget quality with utility, and typically incur substantial computational costs due to parameter fine-tuning. To address this, we propose \textbf{Nullify}, a training-free, non-destructive activation steering method for LLM unlearning. Nullify employs steering vectors during inference to redirect privacy-related activations away from their memorized answers, while satisfying a null-space constraint that leaves retained-query activations essentially unaffected to maintain utility. Evaluations on TOFU and MUSE show that Nullify matches or surpasses established baselines in forget quality while achieving \textbf{near-lossless} preservation of model utility. By avoiding weight updates entirely, Nullify serves as an efficient, plug-and-play inference-time intervention framework.
\end{abstract}

\section{Introduction}

The pre-training of Large Language Models (LLMs) on massive web corpora inevitably leads to the implicit memorization of sensitive, copyrighted, or harmful data \cite{carlini2021extracting}. Legal frameworks such as the GDPR recognize a right to erasure under certain conditions \cite{bourtoule2021machine}, while emerging AI regulations such as the EU AI Act increase compliance pressure around data governance and model lifecycle management \cite{eu_ai_act_2024}. Because retraining a foundation model from scratch for every such request is economically infeasible, LLM unlearning---the targeted, efficient elimination of specific memorized knowledge---has become a cornerstone of trustworthy AI \cite{yao2023large}.

Yet the dominant unlearning paradigm is fundamentally \emph{destructive}. Gradient-based methods (GA \cite{jang2022knowledge}, GradDiff \cite{liu2022few}) and preference-optimization methods (NPO \cite{zhang2024negative}, SimNPO \cite{fan2025simplicity}) all unlearn by backpropagating into and reshaping the model's weights. Because the knowledge to forget and the capabilities to retain are encoded in the \emph{same} shared parameters, every update that suppresses a target memory unavoidably perturbs the retained representations: forgetting and utility become locked in direct opposition, tradable only at the cost of capability degradation, over-refusal, and hallucination \cite{fan2025simplicity}. Representation-level methods (RMU \cite{li2024wmdp}, LUNAR \cite{shen2025lunar}) soften but do not escape this: they either still update the base model's weights or train auxiliary parameters, and their retention behavior is enforced by a soft loss term rather than guaranteed by construction. As Table~\ref{tab:method_comparison} summarizes, no prior method simultaneously avoids weight modification, eliminates optimization, and provides a principled utility-preservation guarantee.

\begin{table}[t]
  \centering
  \small
  \setlength{\tabcolsep}{5pt}
  \begin{tabular}{l | c c c c c}
    \toprule
    Method & TF & NWM & GC & HFQ & UP \\
    \midrule
    GradDiff         & \xmark & \xmark & \xmark & \xmark & \xmark \\
    SimNPO           & \xmark & \xmark & \xmark & \cmark & \xmark \\
    LUNAR            & \xmark & \cmark & \xmark & \xmark & \xmark \\
    \textbf{Nullify} & \cmark & \cmark & \cmark & \cmark & \cmark \\
    \bottomrule
  \end{tabular}
  \caption{Attribute comparison of representative LLM unlearning methods.
  TF: Training-Free (no backpropagation);
  NWM: No (base) Weight Modification;
  GC: Geometric Constraint isolating retained knowledge;
  HFQ: High Forget Quality; UP: Utility Preservation.
}
  \label{tab:method_comparison}
\end{table}

Our key observation is that this opposition is \emph{not} intrinsic to unlearning---it is an artifact of operating in parameter space. If the intervention is instead applied in the \emph{activation space} and \emph{geometrically constrained} to a subspace orthogonal to the retained-knowledge representations, suppressing a target memory and preserving utility cease to compete: the former is delivered by the intervention, the latter holds because the intervention, by construction, has no effect on retained queries. Building on this insight, we propose \textbf{Nullify}, a training-free unlearning framework with a closed-form, null-space-constrained utility-preservation construction. Nullify keeps the base LLM \emph{entirely frozen} and introduces no trainable parameters: a single per-layer linear operator, solved in closed form, steers forget queries off their memorized-answer trajectory while a null-space projector confines every correction to the subspace orthogonal to the retained-knowledge activations. As Figure~\ref{fig:pareto} previews on TOFU \texttt{forget10}, Nullify achieves the best forget quality at the lowest wall-clock cost we measure, while every weight-space baseline cedes ground on forgetting, utility, or compute.

\begin{figure}[t]
  \centering
  \begin{subfigure}{0.49\linewidth}
    \centering
    \includegraphics[width=\linewidth]{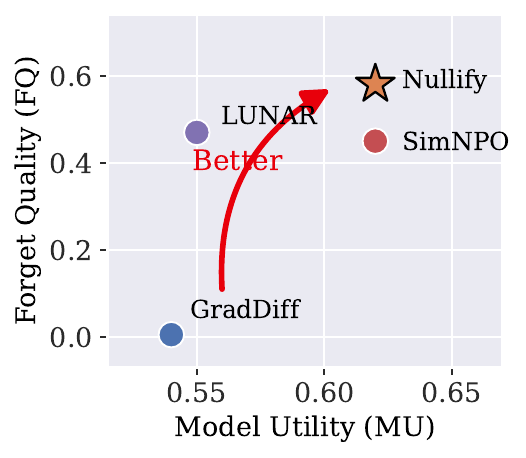}
    \caption{Forget quality vs.\ model utility.}
    \label{fig:pareto-mu}
  \end{subfigure}
  \hfill
  \begin{subfigure}{0.49\linewidth}
    \centering
    \includegraphics[width=\linewidth]{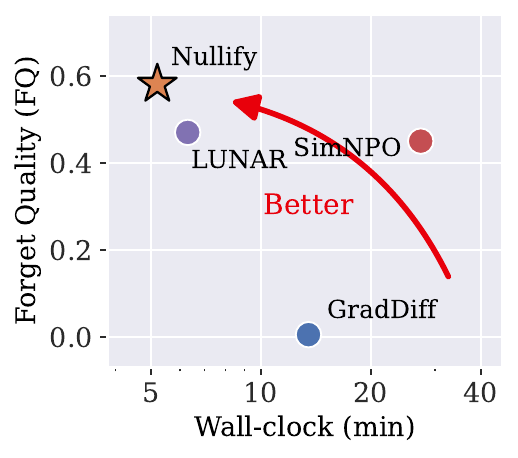}
    \caption{Forget quality vs.\ wall-clock cost.}
    \label{fig:pareto-cost}
  \end{subfigure}
  \caption{Pareto view of LLM unlearning on TOFU \texttt{forget10}
  (Llama-2-7B-chat-hf). Nullify (orange star) dominates all baselines,
  achieving the highest forget quality at the highest model utility~(a)
  and the lowest wall-clock cost~(b); the red arrow marks
  the preferred direction. Wall-clock in (b) refers to the cost benchmark
  of Table~\ref{tab:cost}.}
  \label{fig:pareto}
\end{figure}

Our main contributions are threefold:
\begin{itemize}
    \item \textbf{A non-destructive, guarantee-backed paradigm.} We recast unlearning as closed-form, null-space-constrained activation steering whose correction vanishes on retain queries by construction---to our knowledge the first unlearning method that is at once training-free, weight-preserving, and backed by a principled utility-preservation construction.
    \item \textbf{A fully closed-form, annotation-free pipeline.} The entire per-layer operator is solved in closed form from one forward pass---zero trainable parameters, no backpropagation---with the forget direction derived purely from IDK-contrastive activations and the steering strength being a runtime scalar.
    \item \textbf{State-of-the-art on the forget--utility--cost trilemma.} On TOFU \cite{maini2024tofu} and MUSE \cite{shi2024muse}, Nullify leads baselines in forget quality while keeping utility near-lossless at a small fraction of their compute---the unique Pareto-dominant point among unlearning methods.
\end{itemize}

\section{Related Work}

\subsection{LLM Unlearning}
LLM unlearning aims to remove the influence of specific data from a trained model without full retraining. Exact unlearning---retraining from scratch on the retain set---is the gold standard, but its computational cost and reliance on the complete training corpus make it impractical at LLM scale \cite{kurmanji2024towards, thudi2022unrolling}. The field has therefore converged on \emph{approximate} unlearning, and its dominant family induces forgetting by directly modifying the model's weights. Gradient-based methods maximize the loss on the forget set or contrast it with a retain gradient (GA \cite{jang2022knowledge}, GradDiff \cite{liu2022few}); preference-optimization methods recast forgetting as alignment away from the memorized answer (DPO \cite{rafailov2023direct}, NPO \cite{zhang2024negative}, SimNPO \cite{fan2025simplicity}); and task-vector methods subtract a task-specific weight increment in parameter space \cite{ilharco2022editing}.

Because forgetting and retention are entangled in the same parameters, this paradigm is inherently destructive: it is well documented to erode general utility, and it further incurs substantial computational cost and acute hyperparameter sensitivity \cite{fan2025simplicity, wang2024unlearning}. To standardize evaluation, diverse benchmarks have been proposed---fictitious QA (TOFU \cite{maini2024tofu}), copyrighted corpora (MUSE \cite{shi2024muse}), and hazardous knowledge (WMDP \cite{li2024wmdp})---making the balance between forgetting and utility the field's central criterion. 

\subsection{Representation-Level and Inference-Time Unlearning}
To avoid the collateral damage of destructive weight updates, a recent line of work moves the unlearning intervention from parameters to internal representations, building on activation steering---an inference-time technique that adds a vector to a model's hidden states without any gradient update \cite{rimsky2024steering}, grounded in the linear representation hypothesis \cite{zou2023representation, park2023linear}. Feature-intervention methods redirect the hidden states of forget queries toward random or fixed directions, with RMU \cite{li2024wmdp} as a representative instance; activation-redirection methods such as LUNAR \cite{shen2025lunar} map forget activations onto a refusal-style reference through a learned projection \cite{dang2025effects}. A parallel, training-light direction enforces forgetting at the decoding stage rather than in the representation space, e.g., GUARD \cite{deng2025guard} filters forbidden tokens during generation.

These approaches improve stability over full fine-tuning, yet none fully escapes the original tension: they either still fine-tune  the base model or require training auxiliary parameters, and in both cases retention is enforced through a soft loss term and inherited as a by-product of optimization---with no mechanism that decouples retention from forgetting by construction.


\section{Methodology}

We present \textbf{Nullify}, a training-free activation-steering framework for LLM unlearning. The setting is the standard one~\cite{maini2024tofu,shi2024muse}: an LLM $\Phi$ has been fine-tuned on a corpus containing a designated \emph{forget set} $\mathcal{D}_f=\{(q_i,a_i)\}_{i=1}^{N_f}$ that must be erased, while a \emph{keep set} $\mathcal{D}_k$---the union of the retain split and held-out general-utility prompts---must remain unaffected. Let $\ell\in\{1,\dots,L\}$ index the decoder layers of $\Phi$, $\mathbf{h}^{(\ell)}\in\mathbb{R}^{D}$ a hidden state, and $\mathcal{L}\subseteq\{1,\dots,L\}$ a fixed subset of steered layers.

Rather than fine-tuning $\Phi$---which entangles forgetting with general capability through shared parameters and is the source of the utility collapse and compute cost of gradient-based unlearning---Nullify keeps every weight frozen and intervenes only on hidden states at inference. At each steered layer it adds a per-prompt linear correction $\alpha\,\mathbf{M}^{(\ell)}\mathbf{h}_{\star}^{(\ell)}$ to the residual stream, with the operator $\mathbf{M}^{(\ell)}\in\mathbb{R}^{D\times D}$ obtained in closed form from a single forward pass per prompt over $\mathcal{D}_f\cup\mathcal{D}_k$, no trainable parameters, and no backpropagation. Sec.~\ref{sec:nullify_intervention} fixes the form of the intervention and the contract its operator must satisfy; Sec.~\ref{sec:nullify_direction} derives the unlearning-specific target direction the contract pushes forget activations toward; Sec.~\ref{sec:nullify_pipeline} solves for the two factors of $\mathbf{M}^{(\ell)}$ in closed form; and Sec.~\ref{sec:nullify_algo} states the resulting algorithm and its cost.

\subsection{Intervention Form and Design Contract}
\label{sec:nullify_intervention}

Nullify intervenes during the prompt-encoding pass, before any answer token is produced. For each prompt of length $T$, designate the \emph{anchor} as the hidden state at the last user-prompt token,
\begin{equation}
\label{eq:nullify_anchor}
\mathbf{h}_{\star}^{(\ell)}\;:=\;\mathbf{h}_{T}^{(\ell)},
\end{equation}
the position at which the residual stream has integrated the entire query~\cite{arditi2024refusal}. From this single anchor Nullify computes one correction and broadcasts it across every position of the prompt:
\begin{equation}
\label{eq:nullify_intervention}
\mathbf{h}_{t}^{(\ell)}\leftarrow\mathbf{h}_{t}^{(\ell)}+\alpha\,\mathbf{M}^{(\ell)}\,\mathbf{h}_{\star}^{(\ell)},
\, t=1,\dots,T,\;\ell\in\mathcal{L},
\end{equation}
where $\alpha\in\mathbb{R}$ is a scalar strength. Two consequences of this anchor-broadcast scheme are worth noting. First, because the correction is determined by $\mathbf{h}_{\star}^{(\ell)}$, it is \emph{prompt-adaptive}: different queries receive different corrections, even though $\mathbf{M}^{(\ell)}$ itself is fixed. Second, computing the correction once and reusing it across token positions reduces the per-prompt overhead to a single $D\times D$ matrix--vector product per steered layer; the steered hidden states then populate the cached keys and values, so autoregressive generation runs without any per-token hook.

What should $\mathbf{M}^{(\ell)}\mathbf{h}_{\star}^{(\ell)}$ become on a given prompt? Unlearning gives a sharp answer: \emph{nothing} on a keep query (otherwise utility leaks), and a target displacement $\mathbf{r}^{(\ell)}$ on a forget query---the displacement that takes the activation off its memorized-answer trajectory; we define it in Sec.~\ref{sec:nullify_direction}. We state this as the operator's design contract:
\begin{equation}
\label{eq:nullify_requirements}
\mathbf{M}^{(\ell)}\,\mathbf{h}_{\star}^{(\ell)}\;\approx\;
\begin{cases}
\mathbf{0}, & q \text{ is a keep query},\\[2pt]
\mathbf{r}^{(\ell)}, & q \text{ is a forget query}.
\end{cases}
\end{equation}
The two cases impose unrelated constraints on the same operator. To solve them independently we factorize
\begin{equation}
\label{eq:nullify_factorization}
\mathbf{M}^{(\ell)}\;=\;\tilde{\boldsymbol{\Delta}}^{(\ell)}\,\hat{\mathbf{P}}^{(\ell)},
\end{equation}
assigning the \emph{keep} case to the right factor $\hat{\mathbf{P}}^{(\ell)}$ and the \emph{forget} case to the left factor $\tilde{\boldsymbol{\Delta}}^{(\ell)}$: $\hat{\mathbf{P}}^{(\ell)}$ is built to send every keep anchor to zero, so that the full operator $\mathbf{M}^{(\ell)}\mathbf{h}_{\star}^{(\ell)}$ vanishes on keep queries regardless of the choice of $\tilde{\boldsymbol{\Delta}}^{(\ell)}$; the latter is then chosen to map the projected forget anchors onto $\mathbf{r}^{(\ell)}$. Sec.~\ref{sec:nullify_direction} supplies $\mathbf{r}^{(\ell)}$ and Sec.~\ref{sec:nullify_pipeline} constructs both factors.

\subsection{The Forget Direction: IDK-Contrastive Activations}
\label{sec:nullify_direction}

The contract in Eq.~\eqref{eq:nullify_requirements} requires a per-layer target $\mathbf{r}^{(\ell)}\in\mathbb{R}^{D}$ that captures, in activation space, the \emph{signature of memorization}---the systematic shift a fine-tuned LLM accumulates when it has internalized a factual association and is committing to reproduce it. Unlearning a fact then reduces to canceling this shift in the corresponding activations.

Constructing $\mathbf{r}^{(\ell)}$ is a key challenge specific to the unlearning setting. Forget queries are ordinary factual questions, surface-indistinguishable from retained ones, with no behavioral annotation available, so the target direction cannot be read off labeled samples. We therefore \emph{construct} the direction contrastively, using only the forget set's own $(q,a)$ pairs. For each forget query $q_i$ with memorized answer $a_i$, we read the frozen model's layer-$\ell$ hidden state at the last token of two prompts---one conditioning on the memorized continuation, one on abstention:
\begin{equation}
\label{eq:nullify_extract}
\left\{
\begin{array}{l}
\mathbf{h}_{i}^{(\ell,\mathrm{ans})}=\Phi^{(\ell)}\!\bigl(q_i\oplus a_i\bigr),\\[3pt]
\mathbf{h}_{i}^{(\ell,\mathrm{idk})}=\Phi^{(\ell)}\!\bigl(q_i\oplus\texttt{IDK}\bigr),
\end{array}
\right.
\end{equation}
where $\oplus$ denotes prompt concatenation, \texttt{IDK} is a fixed ``I don't know''-style template, and $\Phi^{(\ell)}(\cdot)\in\mathbb{R}^{D}$ returns the layer-$\ell$ last-token hidden state. Averaging over the forget set yields the two population modes
\begin{equation}
\label{eq:nullify_means}
\left\{
\begin{array}{rl}
\boldsymbol{\mu}_{\mathrm{ans}}^{(\ell)} &=\dfrac{1}{N_f}\displaystyle\sum_{i=1}^{N_f}\mathbf{h}_{i}^{(\ell,\mathrm{ans})},\\[10pt]
\boldsymbol{\mu}_{\mathrm{idk}}^{(\ell)} &=\dfrac{1}{N_f}\displaystyle\sum_{i=1}^{N_f}\mathbf{h}_{i}^{(\ell,\mathrm{idk})},
\end{array}
\right.
\end{equation}
and the \emph{forget direction} is their residual:
\begin{equation}
\label{eq:nullify_forget_direction}
\mathbf{r}^{(\ell)}\;=\;\boldsymbol{\mu}_{\mathrm{ans}}^{(\ell)}-\boldsymbol{\mu}_{\mathrm{idk}}^{(\ell)}\;\in\;\mathbb{R}^{D}.
\end{equation}
Geometrically, $\mathbf{r}^{(\ell)}$ is the average displacement from abstention to commitment; with the sign convention $\alpha<0$ used throughout Eq.~\eqref{eq:nullify_intervention}, adding $\alpha\mathbf{r}^{(\ell)}$ to the residual stream slides a forget activation off the memorized-answer mode $\boldsymbol{\mu}_{\mathrm{ans}}^{(\ell)}$ toward the abstention mode $\boldsymbol{\mu}_{\mathrm{idk}}^{(\ell)}$.


\subsection{Solving the Two Factors in Closed Form}
\label{sec:nullify_pipeline}

We now construct the two factors of Eq.~\eqref{eq:nullify_factorization} so that the contract in Eq.~\eqref{eq:nullify_requirements} is met. For notational clarity we suppress the layer superscript $(\ell)$ throughout this subsection; all quantities are understood as per-layer.

\paragraph{Right factor $\hat{\mathbf{P}}$: annihilating the keep subspace.}
Let $\mathbf{H}_{k}\in\mathbb{R}^{D\times N_k}$ stack the anchor activations of every prompt in $\mathcal{D}_k$. The keep case of Eq.~\eqref{eq:nullify_requirements} demands $\mathbf{M}\mathbf{H}_{k}=\mathbf{0}$. Substituting the factorization, this reduces to $\tilde{\boldsymbol{\Delta}}\hat{\mathbf{P}}\mathbf{H}_{k}=\mathbf{0}$, which holds for \emph{any} choice of $\tilde{\boldsymbol{\Delta}}$ whenever every column of $\mathbf{H}_{k}$ lies in the kernel of $\hat{\mathbf{P}}$. We therefore take $\hat{\mathbf{P}}$ to be the orthogonal projector onto the left null space of $\mathbf{H}_{k}$.

\begin{definition}[Left null space]
\label{def:nullspace}
For $\mathbf{H}\in\mathbb{R}^{D\times N}$, $\mathrm{Null}_{L}(\mathbf{H})=\{\mathbf{x}\in\mathbb{R}^{D}:\mathbf{x}^{\top}\mathbf{H}=\mathbf{0}\}$.
\end{definition}

Computing this projector by a direct SVD of $\mathbf{H}_{k}$ is wasteful when $N_k\gg D$; the following standard equivalence~\cite{wang2021training,fang2025alphaedit,sheng2025alphasteer} reduces it to the $D\times D$ Gram matrix.

\begin{lemma}[Null-space equivalence]
\label{lem:null_equiv}
$\mathrm{Null}_{L}(\mathbf{H})=\mathrm{Null}_{L}(\mathbf{H}\mathbf{H}^{\top})$.
\end{lemma}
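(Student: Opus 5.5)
We prove the equality by showing the two inclusions separately. Take $\mathbf{H}\in\mathbb{R}^{D\times N}$ as in Definition~\ref{def:nullspace}. Both sets are subspaces of $\mathbb{R}^{D}$, so it is enough to show that every $\mathbf{x}$ in one set also lies in the other.

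\emph{Inclusion $\mathrm{Null}_{L}(\mathbf{H})\subseteq\mathrm{Null}_{L}(\mathbf{H}\mathbf{H}^{\top})$.} Suppose $\mathbf{x}^{\top}\mathbf{H}=\mathbf{0}$. Multiplying on the right by $\mathbf{H}^{\top}$ gives $\mathbf{x}^{\top}\mathbf{H}\mathbf{H}^{\top}=\mathbf{0}$. Nothing more is needed for this direction.

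\emph{Inclusion $\mathrm{Null}_{L}(\mathbf{H}\mathbf{H}^{\top})\subseteq\mathrm{Null}_{L}(\mathbf{H})$.} This is the only direction with content, and it rests on positivity of the real inner product. Suppose $\mathbf{x}^{\top}\mathbf{H}\mathbf{H}^{\top}=\mathbf{0}$. Multiplying on the right by $\mathbf{x}$ gives
\begin{equation*}
0=\mathbf{x}^{\top}\mathbf{H}\mathbf{H}^{\top}\mathbf{x}=(\mathbf{H}^{\top}\mathbf{x})^{\top}(\mathbf{H}^{\top}\mathbf{x})=\|\mathbf{H}^{\top}\mathbf{x}\|_2^{2}.
\end{equation*}
A squared Euclidean norm vanishes only for the zero vector, so $\mathbf{H}^{\top}\mathbf{x}=\mathbf{0}$. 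Transposing gives $\mathbf{x}^{\top}\mathbf{H}=\mathbf{0}$. This step uses that the entries are real, which holds for the paper's activations. Over $\mathbb{C}$ one would replace transposes by conjugate transposes.

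Together the two inclusions give the equality. The payoff for the paper is that the orthogonal projector onto $\mathrm{Null}_{L}(\mathbf{H}_k)$ can be computed from the $D\times D$ Gram matrix $\mathbf{H}_k\mathbf{H}_k^{\top}$ instead of the $D\times N_k$ matrix $\mathbf{H}_k$. One could also argue via rank, since $\mathrm{rank}(\mathbf{H}\mathbf{H}^{\top})=\mathrm{rank}(\mathbf{H})$ combined with the first inclusion forces equal dimensions. However, that rank identity is normally proved with the same norm argument, so the direct route above is cleaner.
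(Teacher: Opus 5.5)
Your proof is correct and follows the paper's argument: the forward inclusion comes from right-multiplying by $\mathbf{H}^{\top}$, and the reverse inclusion from $\mathbf{x}^{\top}\mathbf{H}\mathbf{H}^{\top}\mathbf{x}=\lVert\mathbf{H}^{\top}\mathbf{x}\rVert_2^{2}=0$, which forces $\mathbf{H}^{\top}\mathbf{x}=\mathbf{0}$. Your remarks on the real-field assumption and the alternative rank route are accurate but not needed.
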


By Lemma~\ref{lem:null_equiv} (proof in Appendix~\ref{app:lemma_proof}),
we accordingly compute an SVD $\mathbf{H}_{k}\mathbf{H}_{k}^{\top}=\mathbf{V}\boldsymbol{\Sigma}\mathbf{V}^{\top}$, collect the eigenvectors of the $k=\lfloor pD\rfloor$ smallest eigenvalues into $\hat{\mathbf{V}}\in\mathbb{R}^{D\times k}$, and set
\begin{equation}
\label{eq:nullify_projector}
\hat{\mathbf{P}}\;=\;\hat{\mathbf{V}}\hat{\mathbf{V}}^{\top}\;\in\;\mathbb{R}^{D\times D},
\end{equation}
where the \emph{null-space ratio} $p\in(0,1)$ is a structural hyperparameter (sensitivity reported in Sec.~\ref{sec:sensitivity}). $\hat{\mathbf{P}}$ is a symmetric idempotent---$\hat{\mathbf{P}}^{\top}=\hat{\mathbf{P}}$ and $\hat{\mathbf{P}}^{2}=\hat{\mathbf{P}}$---and $\hat{\mathbf{P}}\mathbf{H}_{k}=\mathbf{0}$ holds down to the smallest retained eigenvalue of the Gram. The keep case of Eq.~\eqref{eq:nullify_requirements} is thus discharged \emph{independently} of $\tilde{\boldsymbol{\Delta}}$.

\paragraph{Left factor $\tilde{\boldsymbol{\Delta}}$: a closed-form forget mapping.}
With $\hat{\mathbf{P}}$ fixed, the forget case of Eq.~\eqref{eq:nullify_requirements} becomes: choose $\tilde{\boldsymbol{\Delta}}$ so that $\tilde{\boldsymbol{\Delta}}\hat{\mathbf{P}}\mathbf{H}_{f}\approx\mathbf{R}$, where $\mathbf{H}_{f}\in\mathbb{R}^{D\times N_f}$ stacks the anchor activations of the raw forget queries $\{q_i\}$ and $\mathbf{R}=\mathbf{r}\,\mathbf{1}_{N_f}^{\top}$ stacks $N_f$ copies of the forget direction. We solve it as a Frobenius-regularized least-squares problem,
\begin{equation}
\label{eq:nullify_objective}
\min_{\tilde{\boldsymbol{\Delta}}}\;
\big\lVert\tilde{\boldsymbol{\Delta}}\hat{\mathbf{P}}\mathbf{H}_{f}-\mathbf{R}\big\rVert_{F}^{2}
+\lambda\big\lVert\tilde{\boldsymbol{\Delta}}\hat{\mathbf{P}}\big\rVert_{F}^{2},
\end{equation}
whose closed-form minimizer, after applying the symmetric-idempotent identities of $\hat{\mathbf{P}}$, is
\begin{equation}
\label{eq:nullify_ridge}
\tilde{\boldsymbol{\Delta}}^{\star}=\mathbf{R}\,\mathbf{H}_{f}^{\top}\hat{\mathbf{P}}\,
\bigl(\hat{\mathbf{P}}\mathbf{H}_{f}\mathbf{H}_{f}^{\top}\hat{\mathbf{P}}+\lambda\hat{\mathbf{P}}\bigr)^{\!+},
\end{equation}
with $(\cdot)^{+}$ the Moore--Penrose pseudoinverse and $\lambda>0$ the ridge coefficient; the full derivation, including the handling 
of the rank deficiency of $\hat{\mathbf{P}}$, is given in 
Appendix~\ref{app:closed_form}.

\paragraph{Assembly.}
Restoring the layer index, the per-layer steering operator at layer $\ell$ is
\begin{equation}
\label{eq:nullify_operator}
\mathbf{M}^{(\ell)}\;=\;\tilde{\boldsymbol{\Delta}}^{\star(\ell)}\hat{\mathbf{P}}^{(\ell)},
\end{equation}
applied at inference through Eq.~\eqref{eq:nullify_intervention}. By construction, $\hat{\mathbf{P}}^{(\ell)}$ annihilates every column of $\mathbf{H}_{k}^{(\ell)}$ down to the smallest retained eigenvalue of the Gram, while $\tilde{\boldsymbol{\Delta}}^{\star(\ell)}$ delivers the ridge-regularized best fit of $\mathbf{r}^{(\ell)}$ on the projected forget anchors---the two cases of the contract in Eq.~\eqref{eq:nullify_requirements}, separately and tightly.

\subsection{Algorithm and Complexity}
\label{sec:nullify_algo}

Algorithm~\ref{alg:nullify} summarizes the offline construction: a single forward pass per prompt over $\mathcal{D}_k\cup\mathcal{D}_f$ (plus the IDK-contrastive auxiliary prompts), one $D\times D$ Gram SVD, and one pseudoinverse---per steered layer.


\paragraph{Cost.}
Each layer costs $O(N_k D^2)$ to form the keep Gram and $O(D^3)$ for the subsequent SVD and pseudoinverse; the closed-form left-factor adds $O(N_f D^2)$. Total offline cost per layer is $O\bigl((N_k+N_f)D^2 + D^3\bigr)$. \emph{At inference}, the steered hidden states populate the cached keys and values, so generation reuses the steered KV cache without any per-token hook; the per-prompt overhead is a single $D\times D$ matrix--vector product per steered layer, asymptotically negligible against the base forward pass. Nullify adds zero trainable parameters and never invokes a backward pass; it exposes four choices---the steered-layer set $\mathcal{L}$ and three scalars $(p,\lambda,\alpha)$---all subjected to sensitivity analysis in Sec.~\ref{sec:sensitivity}.

\begin{algorithm}[t]
\caption{Nullify (offline operator construction).}
\label{alg:nullify}
\begin{algorithmic}[1]
\REQUIRE frozen LLM $\Phi$; forget set $\mathcal{D}_f$; keep set $\mathcal{D}_k$; steered layers $\mathcal{L}$; hyperparameters $(p,\lambda)$.
\FOR{$\ell\in\mathcal{L}$}
  \STATE $\mathbf{H}_{k}^{(\ell)}\!\leftarrow$ anchor activations of all prompts in $\mathcal{D}_k$
  \STATE $\mathbf{H}_{f}^{(\ell)}\!\leftarrow$ anchor activations of forget queries $\{q_i\}$ from $\mathcal{D}_f$
  \STATE compute $\{\mathbf{h}_{i}^{(\ell,\mathrm{ans})},\mathbf{h}_{i}^{(\ell,\mathrm{idk})}\}$ via Eq.~\eqref{eq:nullify_extract}
  \STATE $\mathbf{r}^{(\ell)}\!\leftarrow\boldsymbol{\mu}_{\mathrm{ans}}^{(\ell)}-\boldsymbol{\mu}_{\mathrm{idk}}^{(\ell)}$
  \STATE $\hat{\mathbf{P}}^{(\ell)}\!\leftarrow$ Eq.~\eqref{eq:nullify_projector} from $\mathbf{H}_{k}^{(\ell)}{\mathbf{H}_{k}^{(\ell)}}^{\!\top}$
  \STATE $\tilde{\boldsymbol{\Delta}}^{\star(\ell)}\!\leftarrow$ closed-form solution of Eq.~\eqref{eq:nullify_ridge}
  \STATE $\mathbf{M}^{(\ell)}\!\leftarrow\tilde{\boldsymbol{\Delta}}^{\star(\ell)}\hat{\mathbf{P}}^{(\ell)}$
\ENDFOR
\ENSURE per-layer operators $\{\mathbf{M}^{(\ell)}\}_{\ell\in\mathcal{L}}$, applied at inference via Eq.~\eqref{eq:nullify_intervention}.
\end{algorithmic}
\end{algorithm}

\section{Experiments}

\subsection{Setup}
\label{sec:setup}

\paragraph{Benchmarks.}
We evaluate Nullify on two standard LLM unlearning benchmarks. \textbf{TOFU}~\cite{maini2024tofu} consists of 4{,}000 question--answer pairs about 200 fictitious authors; because the target knowledge is entirely synthetic and introduced by supervised fine-tuning (SFT), unlearning can be assessed against a model retrained from scratch on the retain split. We report the \texttt{forget05} and \texttt{forget10} settings, which erase 5\% and 10\% of the data. \textbf{MUSE}~\cite{shi2024muse} simulates real-world privacy and copyright removal; we evaluate on its News corpus (MUSE-News).

\paragraph{Base model.}
For the main TOFU experiments, all methods operate on \texttt{Llama-2-7B-chat-hf} fine-tuned on the full TOFU corpus; the retrained reference is obtained by fine-tuning the same model on the retain split only. Appendix~\ref{app:cross-backbone} reports a cross-family replication on Qwen3-8B~\cite{yang2025qwen3} under \texttt{forget10}. MUSE-News follows the benchmark's standard setup.

\paragraph{Baselines.}
Besides the \emph{Retrain} gold reference~\cite{bourtoule2021machine} and the un-unlearned \emph{Original} model, we compare against four families: \emph{gradient-based} (GA~\cite{jang2022knowledge}, GradDiff~\cite{liu2022few}); \emph{weight editing} (Task Vector~\cite{ilharco2022editing}); \emph{preference optimization} (IDK~\cite{eldan2023s}, NPO~\cite{zhang2024negative}, and its reference-free variant SimNPO~\cite{fan2025simplicity}); and \emph{representation intervention} (LUNAR~\cite{shen2025lunar}).

\paragraph{Metrics.}
On TOFU, the two summary metrics are \textbf{Forget Quality (FQ)}---the Kolmogorov--Smirnov $p$-value between the truth-ratio distributions of the unlearned and retrained models, so that statistical indistinguishability from Retrain yields $\mathrm{FQ}\!\to\!1$ and \textbf{Model Utility (MU)}, the harmonic mean of performance over the non-target Real Authors, World Facts, and Retain sets, where any single capability drop sharply penalizes the aggregate. For completeness we also report per-set ROUGE-L (RL), answer probability (P), and truth ratio (TR). On MUSE-News we report \textbf{KM-D$_\text{f}$} (forget-set knowledge memorization), \textbf{KM-D$_\text{r}$} (retain-set knowledge), and \textbf{PrivLeak}, a membership-inference-based privacy score whose ideal value is $0$.

\paragraph{Implementation.}
Nullify keeps the base model frozen and adds no trainable parameters. Following the localization of factual recall to mid-upper transformer layers~\cite{geva2021transformer,meng2022locating} and the layer-selection convention of AlphaSteer~\cite{sheng2025alphasteer}, we steer a fixed set of ten mid-upper decoder layers ($\mathcal{L}=\{8\text{--}14,16,18,19\}$). Unless noted otherwise, we use null-space ratio $p\!=\!0.75$, ridge coefficient $\lambda\!=\!10$, and steering strength $\alpha\!=\!-2.0$. Experiments conducted in this work use greedy decoding on a single NVIDIA H800 80\,GB GPU. For direct comparability, we follow the same benchmark setting and evaluation protocol as~\citet{fan2025simplicity}, including the backbone, forget/retain splits, and metric definitions. Values in $\ddagger$-marked rows of Tables~\ref{tab:tofu} and~\ref{tab:muse} are taken from~\citet{fan2025simplicity}; unmarked methods are evaluated in this work under this matched protocol.

\begin{table*}[!t]
  \centering\small
  \setlength{\tabcolsep}{4pt}
  \definecolor{lightgray}{gray}{0.92}
  \begin{tabular}{l | ccc c | ccc ccc ccc c}
    \toprule
    \multirow{3}{*}{\textbf{Method}}
      & \multicolumn{4}{c|}{\textbf{Unlearning Efficacy}}
      & \multicolumn{10}{c}{\textbf{Utility Preservation}} \\
    \cmidrule(lr){2-5} \cmidrule(lr){6-15}
      & \multicolumn{3}{c}{Forget Set} & \multirow{2}{*}{FQ $\uparrow$}
      & \multicolumn{3}{c}{Real Authors $\uparrow$}
      & \multicolumn{3}{c}{World Facts $\uparrow$}
      & \multicolumn{3}{c}{Retain Set $\uparrow$}
      & \multirow{2}{*}{MU $\uparrow$} \\
    \cmidrule(lr){2-4} \cmidrule(lr){6-8} \cmidrule(lr){9-11} \cmidrule(lr){12-14}
      & 1-RL & 1-P & TR &
      & RL & P & TR & RL & P & TR & RL & P & TR & \\
    \midrule
    \rowcolor{lightgray}\multicolumn{15}{c}{\textit{Forget05}} \\
    \midrule
    Original$^\ddagger$   & 0.04 & 0.01 & 0.49 & 0.00 & 0.93 & 0.44 & 0.58 & 0.91 & 0.43 & 0.55 & 0.98 & 0.99 & 0.48 & 0.62 \\
    Retrain$^{\dagger,\ddagger}$ & 0.61 & 0.85 & 0.66 & 1.00 & 0.92 & 0.44 & 0.57 & 0.90 & 0.43 & 0.54 & 0.97 & 0.99 & 0.48 & 0.62 \\
    \midrule
    GA$^\ddagger$         & 1.00 & 1.00 & 0.66 & $<$0.01 & 0.00 & 0.20 & 0.40 & 0.00 & 0.30 & 0.28 & 0.00 & 0.00 & 0.15 & 0.00 \\
    GradDiff$^\ddagger$   & 1.00 & 1.00 & 0.60 & $<$0.01 & 0.59 & 0.59 & 0.81 & 0.88 & 0.46 & 0.59 & 0.42 & 0.49 & 0.48 & 0.56 \\
    IDK$^\ddagger$        & 0.98 & 0.40 & 0.55 & $<$0.01 & 0.65 & 0.48 & 0.63 & 0.82 & 0.44 & 0.55 & 0.55 & 0.86 & 0.43 & 0.57 \\
    NPO$^\ddagger$        & 0.73 & 0.94 & 0.67 & 0.79 & 0.91 & 0.50 & 0.62 & 0.90 & 0.50 & 0.61 & 0.47 & 0.51 & 0.44 & 0.57 \\
    SimNPO$^\ddagger$     & 0.74 & 0.97 & 0.69 & \textbf{0.99} & 0.90 & 0.50 & 0.64 & 0.90 & 0.48 & 0.60 & 0.54 & 0.56 & 0.44 & 0.58 \\
    LUNAR      & 0.70 & 0.69 & 0.69 & 0.87 & 0.71 & 0.49 & 0.63 & 0.80 & 0.44 & 0.57 & 0.69 & 0.82 & 0.43 & 0.59 \\
    \textbf{Nullify} & 0.62 & 0.88 & 0.69 & 0.96 & 0.92 & 0.47 & 0.61 & 0.88 & 0.44 & 0.58 & 0.95 & 0.86 & 0.44 & \textbf{0.62} \\
    \midrule
    \rowcolor{lightgray}\multicolumn{15}{c}{\textit{Forget10}} \\
    \midrule
    Original$^\ddagger$   & 0.03 & 0.01 & 0.48 & 0.00 & 0.93 & 0.44 & 0.58 & 0.91 & 0.43 & 0.55 & 0.98 & 0.99 & 0.48 & 0.62 \\
    Retrain$^{\dagger,\ddagger}$ & 0.61 & 0.84 & 0.67 & 1.00 & 0.93 & 0.45 & 0.59 & 0.91 & 0.42 & 0.54 & 0.98 & 0.99 & 0.47 & 0.62 \\
    \midrule
    GA$^\ddagger$         & 1.00 & 1.00 & 0.70 & $<$0.01 & 0.00 & 0.28 & 0.37 & 0.00 & 0.29 & 0.31 & 0.00 & 0.00 & 0.11 & 0.00 \\
    GradDiff$^\ddagger$   & 1.00 & 1.00 & 0.67 & $<$0.01 & 0.44 & 0.49 & 0.67 & 0.89 & 0.48 & 0.58 & 0.48 & 0.60 & 0.46 & 0.54 \\
    IDK$^\ddagger$        & 0.98 & 0.37 & 0.54 & $<$0.01 & 0.46 & 0.45 & 0.59 & 0.84 & 0.43 & 0.55 & 0.56 & 0.88 & 0.44 & 0.54 \\
    NPO$^\ddagger$        & 0.78 & 0.90 & 0.70 & 0.29 & 0.91 & 0.52 & 0.66 & 0.85 & 0.48 & 0.61 & 0.44 & 0.46 & 0.39 & 0.55 \\
    SimNPO$^\ddagger$     & 0.78 & 0.91 & 0.71 & 0.45 & 0.90 & 0.54 & 0.70 & 0.88 & 0.50 & 0.64 & 0.54 & 0.76 & 0.47 & \textbf{0.62} \\
    LUNAR      & 0.83 & 0.84 & 0.67 & 0.47 & 0.60 & 0.52 & 0.67 & 0.72 & 0.43 & 0.59 & 0.53 & 0.66 & 0.43 & 0.55 \\
    \textbf{Nullify} & 0.59 & 0.92 & 0.71 & \textbf{0.58} & 0.92 & 0.47 & 0.61 & 0.90 & 0.42 & 0.55 & 0.95 & 0.88 & 0.45 & \textbf{0.62} \\
    \bottomrule
  \end{tabular}
  \caption{Results on TOFU~\cite{maini2024tofu} (\texttt{forget05}/\texttt{forget10}, Llama-2-7B-chat-hf). \textbf{Unlearning Efficacy}: forget-set 1$-$ROUGE-L, 1$-$Prob., Truth Ratio, and the summary metric FQ. \textbf{Utility Preservation}: RL/P/TR on Real Authors, World Facts, and the Retain set, summarized by MU. $^\dagger$Retrain is the gold reference; the forget-set columns are best when \emph{closest to Retrain}, not when largest. Rows marked with $^\ddagger$ are taken from~\citet{fan2025simplicity}. \textbf{Bold}: best FQ / MU among unlearning methods.}
  \label{tab:tofu}
\end{table*}

\subsection{Main Results}

Tables~\ref{tab:tofu} and~\ref{tab:muse} report quantitative results on TOFU and MUSE-News. Across both benchmarks Nullify delivers the strongest forget--utility balance: it suppresses the target knowledge while leaving general capability essentially intact, avoiding the utility collapse that the gradient- and preference-based baselines incur. We detail the per-benchmark results below, contrasting Nullify against the baselines on both forget quality and utility.

\subsubsection{Results on TOFU}

The gradient-based baselines expose the cost of destructive updates. Gradient Ascent maximizes forget-set loss but destroys the model: its MU falls to $0.00$ under both forget ratios, a complete loss of linguistic competence. GradDiff and IDK avoid total collapse yet still leave MU at $0.54$--$0.57$, well below the $0.62$ of the Original and Retrain models.

Nullify is the only method whose utility is indistinguishable from the references: MU holds at $0.62$ on both \texttt{forget05} and \texttt{forget10}, matching Retrain exactly. On \texttt{forget05} it reaches an FQ of $0.96$, within $0.03$ of the strongest baseline SimNPO ($0.99$)---but SimNPO buys that forget quality with an MU of $0.58$, whereas Nullify gives up nothing. On the harder \texttt{forget10} split Nullify dominates every baseline: it raises FQ from SimNPO's $0.45$ to $0.58$ and holds MU at $0.62$---the highest among unlearning methods, tied with SimNPO and equal to Retrain. The per-set numbers (\texttt{forget10}) confirm the absence of collateral damage: Nullify's ROUGE-L on World Facts ($0.90$) and the Retain set ($0.95$) tracks the Original model ($0.91$/$0.98$) closely.


\subsubsection{Results on MUSE-News}

On MUSE-News (Table~\ref{tab:muse}), the privacy metric PrivLeak is the most discriminative: it is near $0$ only when a method removes the target knowledge to the same degree as Retrain, and diverges under both under- and over-unlearning.

The baselines fail in opposite directions. The preference- and gradient-based methods leak heavily---NPO ($108.91$), GradDiff ($108.12$), and SimNPO ($72.93$) sit far above $0$---while Task Vector swings to the opposite extreme ($-100.00$), over-correcting past the retrained reference. The representation baseline LUNAR barely moves the needle: its PrivLeak of $-99.80$ is essentially the un-unlearned level of the Original model ($-98.71$). Gradient Ascent collapses outright, with KM-D$_\text{r}$ at $0.00$.

Nullify is the only method that lands near the ideal: a PrivLeak of $-5.18$, an order of magnitude closer to $0$ than any baseline. Its forget-set memorization KM-D$_\text{f}$ drops from the Original's $0.63$ to $0.27$, close to the Retrain reference ($0.33$), while retain-set knowledge KM-D$_\text{r}$ stays at $0.51$, near Retrain ($0.54$) and the Original ($0.54$). Nullify thus removes the target knowledge to a Retrain-like degree without the privacy leakage or the utility collapse of the baselines.

\begin{table}[!t]
  \centering\small
  \setlength{\tabcolsep}{5pt}
  \begin{tabular}{l | c c c}
    \toprule
    \multirow{2}{*}{\textbf{Method}}
      & \textbf{Unlearning} & \textbf{Privacy} & \textbf{Utility} \\
      & KM-D$_\text{f}$ & PrivLeak & KM-D$_\text{r}$ \\
    \midrule
    Original$^\ddagger$        & 0.63 & $-$98.71 & 0.54 \\
    Retrain$^{\dagger,\ddagger}$ & 0.33 & \phantom{$-$}0.00 & 0.54 \\
    \midrule
    GA$^\ddagger$              & 0.00 & \phantom{$-$}20.14  & 0.00 \\
    GradDiff$^\ddagger$        & 0.31 & 108.12              & 0.28 \\
    Task Vector$^\ddagger$     & 0.59 & $-$100.00           & 0.48 \\
    NPO$^\ddagger$             & 0.57 & 108.91              & 0.38 \\
    SimNPO$^\ddagger$          & 0.45 & \phantom{$-$}72.93  & 0.40 \\
    LUNAR           & 0.55 & $-$99.80            & 0.52 \\
    \midrule
    \textbf{Nullify} & 0.27 & \textbf{$-$5.18} & 0.51 \\
    \bottomrule
  \end{tabular}
  \caption{Results on MUSE-News~\cite{shi2024muse}. KM-D$_\text{f}$/KM-D$_\text{r}$: forget-/retain-set knowledge memorization (best when closest to Retrain$^\dagger$, $0.33$/$0.54$); PrivLeak: privacy leakage (ideal $=0$, \textbf{bold}). Rows marked with $^\ddagger$ are taken from~\citet{fan2025simplicity}.}
  \label{tab:muse}
\end{table}

\subsection{Robustness Experiments}
\label{sec:robustness}

We evaluate extraction robustness with approximately 11{,}200 prompts spanning six attack families and 14 variants: prefix injection, roleplay, few-shot elicitation, partial-answer elicitation, multi-turn extraction, and cloze/multiple-choice prompting. A separate test covers question paraphrasing and cloze-style reformatting. Tables~\ref{tab:robust-attacks} and~\ref{tab:robust-formats} report exact-span leakage---the fraction of forget-example outputs containing a gold answer span---where lower is better. The six-family score is an unweighted macro-average; multi-turn leakage is measured on the final extraction turn.

\begin{table}[!t]
  \centering
  \small
  \setlength{\tabcolsep}{2pt}
  \renewcommand{\arraystretch}{0.95}
  \resizebox{\columnwidth}{!}{%
  \begin{tabular}{@{}lrrr@{}}
    \toprule
    \textbf{Attack family} & \textbf{Original} & \textbf{Retrain} & \textbf{Nullify} \\
    \midrule
    Prefix injection           & 0.633 & 0.087 & 0.222 \\
    Roleplay                   & 0.434 & 0.071 & 0.111 \\
    Few-shot elicitation       & 0.388 & 0.074 & \textbf{0.014} \\
    Partial-answer elicitation & 0.202 & 0.042 & 0.065 \\
    Multi-turn extraction      & 0.405 & 0.085 & 0.128 \\
    Cloze / multiple choice    & 0.285 & 0.210 & \textbf{0.030} \\
    \midrule
    \textbf{Six-family mean}   & \textbf{0.391} & \textbf{0.095} & \textbf{0.095} \\
    \bottomrule
  \end{tabular}%
  }
  \caption{Exact-span leakage under adversarial extraction (lower is better). The mean is unweighted across families; multi-turn extraction is scored on the final extraction turn.}
  \label{tab:robust-attacks}
\end{table}

\begin{table}[!t]
  \centering
  \small
  \setlength{\tabcolsep}{3pt}
  \begin{tabular}{@{}lrrr@{}}
    \toprule
    \textbf{Prompt type} & \textbf{Original} & \textbf{Retrain} & \textbf{Nullify} \\
    \midrule
    Plain QA             & 0.975 & 0.085 & 0.253 \\
    Paraphrased question & 0.478 & 0.128 & 0.218 \\
    Cloze completion     & 0.358 & 0.153 & \textbf{0.060} \\
    \bottomrule
  \end{tabular}
  \caption{Exact-span leakage under paraphrasing and reformatting (lower is better).}
  \label{tab:robust-formats}
\end{table}

Nullify lowers leakage relative to Original in every family, reducing the macro-average from $0.391$ to $0.095$, which matches Retrain at the reported three-decimal precision. On this metric, Nullify yields lower leakage than Retrain for few-shot elicitation ($0.014$ vs.\ $0.074$), cloze/multiple choice ($0.030$ vs.\ $0.210$), and cloze completion ($0.060$ vs.\ $0.153$). Under paraphrasing, leakage remains below Original ($0.218$ vs.\ $0.478$), indicating partial transfer to nearby free-form query formats.


\subsection{Compute and Memory Efficiency}
\label{sec:cost}

Beyond unlearning quality, deployment cost determines whether a method can serve real-world right-to-be-forgotten requests. We benchmark every method end-to-end on identical hardware (a single NVIDIA H800 80\,GB) under the TOFU \texttt{forget05} / Llama-2-7B-chat-hf setting, reporting peak GPU memory and the wall-clock paid per single \{hyperparameter, seed\} configuration. Wall-clock follows each method's paper recipe: 5 epochs for GA, GradDiff and IDK~\cite{maini2024tofu}; 20 epochs for NPO and 10 for SimNPO~\cite{fan2025simplicity}; 20 epochs of single-layer LoRA training for LUNAR~\cite{shen2025lunar}; and a single closed-form preparation plus a per-$\alpha$ inference pass for Nullify. Reported times include training and the default post-hoc TOFU evaluation, so the comparison is end-to-end.

\begin{table}[!t]
  \centering\small
  \setlength{\tabcolsep}{4pt}
  \begin{tabular}{l | c c c r}
    \toprule
    \textbf{Method} & \textbf{\#Train} & \textbf{Mem} & \textbf{Wall-clock} & \textbf{$\times$ vs} \\
                    & \textbf{params}  & \textbf{(GB)} & \textbf{(min)}      & \textbf{Nullify} \\
    \midrule
    Retrain$^\dagger$ & 6.6\,B & $\sim$79 & $\sim$300 & 57.7$\times$ \\
    \midrule
    GA           & 6.6\,B & 79 & 14.5  &  2.8$\times$ \\
    GradDiff     & 6.6\,B & 79 & 13.5  &  2.6$\times$ \\
    IDK          & 6.6\,B & 79 & $\sim$13 & 2.5$\times$ \\
    NPO          & 6.6\,B & 79 & 58.2  & 11.2$\times$ \\
    SimNPO       & 6.6\,B & 79 & 27.4  &  5.3$\times$ \\
    LUNAR        & 66\,K  & 16 &  6.3  &  1.2$\times$ \\
    \midrule
    \textbf{Nullify} & \textbf{0} & \textbf{14} & \textbf{5.2} & \textbf{1.0$\times$} \\
    \bottomrule
  \end{tabular}
  \caption{End-to-end cost on TOFU \texttt{forget05} / Llama-2-7B-chat-hf, single NVIDIA H800. \#Train params: parameters updated by gradient descent. Wall-clock: per \{hyperparameter, seed\} configuration at each method's paper-recipe epoch budget.}
  \label{tab:cost}
\end{table}

Table~\ref{tab:cost} summarizes the result. Every full-finetune baseline---GA, GradDiff, NPO, SimNPO---carries a 6.6\,B-parameter Adam state, saturates GPU memory at ${\sim}79$\,GB, and takes 13--58 minutes per configuration. Nullify instead runs in pure inference memory (14\,GB) and finishes one configuration in 5.2 minutes: 3.2 minutes to extract activations and solve the closed-form ridge once, plus 1.95 minutes of inference-time evaluation at one $\alpha$. Nullify dominates five of the six baselines outright---GA, GradDiff, IDK, NPO and LUNAR are each both slower and weaker (Nullify, e.g., surpasses LUNAR in forget quality, $0.96$ vs.\ $0.87$, at lower cost, $5.2$ vs.\ $6.3$ min). Only SimNPO and the Retrain oracle reach higher forget quality, and they pay $5.3\times$ and ${\sim}58\times$ Nullify's wall-clock to do so. Nullify is therefore Pareto-optimal in the forget-quality / wall-clock plane: no baseline is simultaneously faster and stronger. Moreover, since $\alpha$ is a runtime scalar that does not enter the closed-form preparation, adapting Nullify to a new forget request at deployment requires only a single inference-time evaluation per $\alpha$ value, with no retraining.


\subsection{Analysis}


\subsubsection{Ablation Study}

Nullify combines two components: the null-space projection that confines the steering operator, and the IDK-contrastive forget direction. Table~\ref{tab:ablation} removes each in turn on \texttt{forget05}. Replacing the null-space projector with the identity ($\hat{\mathbf{P}}{=}\mathbf{I}$) removes the utility guarantee: even at its most conservative strength ($\alpha{=}-0.3$) MU is only $0.36$, and by $\alpha{=}-1.0$ it has collapsed to $0.01$---there is no steering strength at which an unconstrained operator both forgets and preserves utility. Replacing the IDK-contrastive direction with a random vector of matched norm halves forget quality (FQ $0.96\!\to\!0.466$) while leaving MU near $0.60$. The two ablations isolate the two roles cleanly: the null-space projection is what preserves utility, and the IDK-contrastive direction is what carries the forget signal.

\subsubsection{Hyperparameter Sensitivity}
\label{sec:sensitivity}

Tables~\ref{tab:sweep-lambda}--\ref{tab:sweep-p} in Appendix~\ref{app:sensitivity} sweep the three scalars of Nullify on \texttt{forget05}. Forget quality peaks at the operating point $\lambda{=}10$, $\alpha{=}-2.0$, with $\alpha$ sitting inside a wide robust plateau---FQ exceeds $0.92$ throughout $\alpha\in[-2.3,-1.9]$ rather than on a knife-edge. Critically, MU stays within $0.61$--$0.63$ across \emph{every} value of $p$, direct evidence that utility preservation is delivered by the null-space construction, not by hyperparameter tuning.

\subsubsection{Forget-Quality Visualization}
\label{sec:tr_viz}

To make the statistical content of FQ concrete, Figure~\ref{fig:tr_dist} overlays the per-example truth-ratio densities of the Original model, the Nullify-steered model, and the Retrain oracle on the \texttt{forget05} and \texttt{forget10} forget sets. FQ is by definition the Kolmogorov--Smirnov $p$-value between the steered and Retrain distributions, so a perfect match gives $p\!\to\!1$ and a memorized forget set shows up as a narrow low-truth-ratio mode.

The Original model exhibits exactly this memorization signature---a sharp peak near $0.56$--$0.58$ that places most forget questions far outside the Retrain envelope. Nullify lifts this mass back into a broad distribution centered at $0.86$ (\texttt{forget05}) and $0.87$ (\texttt{forget10}), nearly overlapping the Retrain reference at $0.90$ and $0.91$. The resulting KS $p$-values, $0.96$ and $0.58$, are the FQ scores of Table~\ref{tab:tofu}. The visualization confirms that Nullify does not merely lower a verbatim-recall score; it restores the \emph{distributional} indistinguishability from Retrain that FQ is designed to measure. Representative qualitative \texttt{forget05} outputs, contrasting Nullify's forget-set abstention with its unchanged retain-set generations, are provided in Appendix~\ref{app:case} (Table~\ref{tab:case}).

\begin{table}[!t]
  \centering\small
  \setlength{\tabcolsep}{5pt}
  \resizebox{\columnwidth}{!}{%
  \begin{tabular}{l | c c c}
    \toprule
    \textbf{Variant} & \textbf{best $\alpha$} & \textbf{FQ} & \textbf{MU} \\
    \midrule
    \textbf{Nullify (full)} & $-2.0$ & \textbf{0.965} & \textbf{0.622} \\
    \;\; w/o null-space proj.\ ($\hat{\mathbf{P}}{=}\mathbf{I}$) & $-0.3$ & 0.270 & 0.360 \\
    \;\; w/o IDK-contrastive dir.\ (random $\mathbf{r}$) & $-2.0$ & 0.466 & 0.600 \\
    \bottomrule
  \end{tabular}%
  }
  \caption{Ablation on TOFU \texttt{forget05}. The $\hat{\mathbf{P}}{=}\mathbf{I}$ row is shown at its own best $\alpha$ ($-0.3$); by $\alpha{=}-1.0$ it has already collapsed to FQ $0.00$ / MU $0.01$.}
  \label{tab:ablation}
\end{table}

\begin{figure}[t]
\centering
\includegraphics[width=\linewidth]{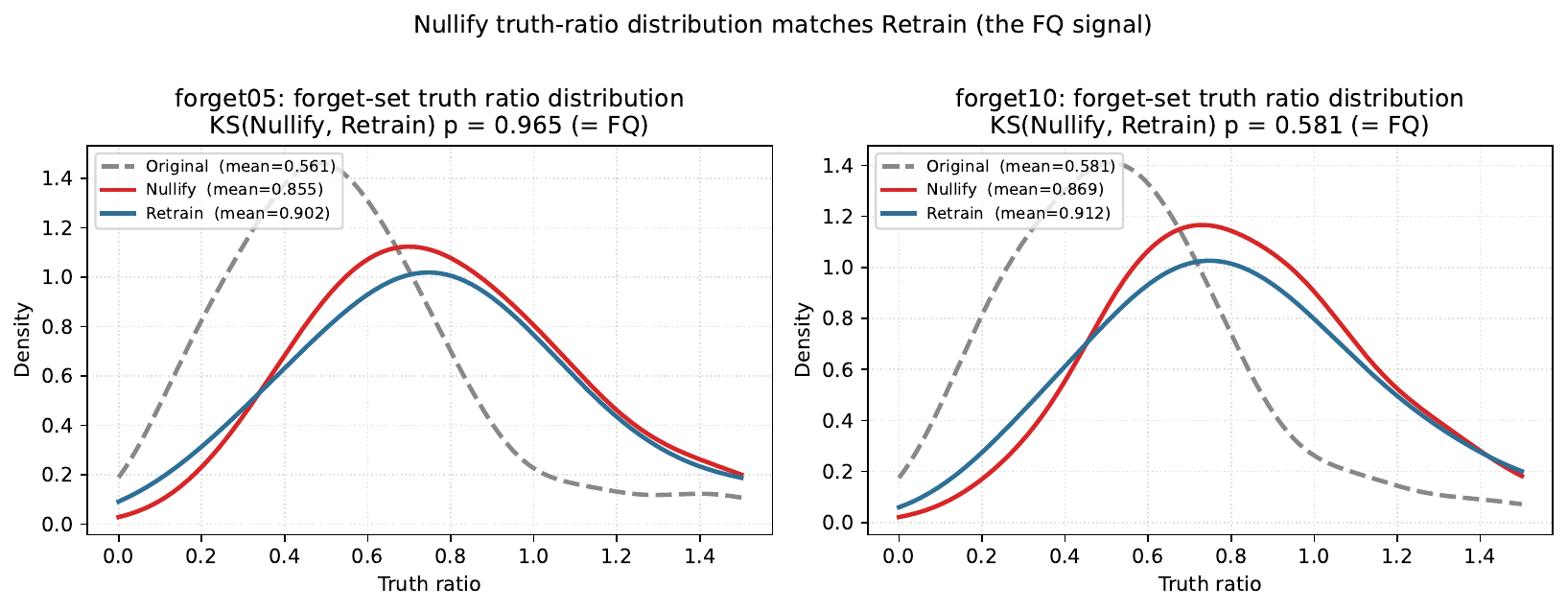}
\caption{Forget-set truth-ratio density under three models on TOFU. Dashed gray: Original (memorized); red: Nullify; blue: Retrain (oracle). Nullify collapses the memorization peak back onto the Retrain envelope; the Kolmogorov--Smirnov $p$-value in each title is the Forget Quality used in Table~\ref{tab:tofu}.}
\label{fig:tr_dist}
\end{figure}

\section{Conclusion}

We present Nullify, a training-free, null-space-constrained activation-steering framework for LLM unlearning. By steering forget queries off their memorized-answer trajectory while confining every correction to the null space of retained-knowledge activations, Nullify decouples forgetting from utility preservation rather than trading one against the other---without modifying any model parameter. On TOFU and MUSE, it matches or surpasses the strongest gradient-based, preference-optimization, and representation-intervention baselines in forget quality while keeping utility near-lossless, at a small fraction of their compute and memory. More broadly, Nullify shifts LLM unlearning from destructive parameter reshaping to lightweight, geometrically constrained activation steering---a plug-and-play path to the "right to be forgotten." Future work will extend it to multimodal architectures and long-context unlearning.

\section*{Limitations}

As an inference-time intervention, Nullify suppresses access to memorized answers while its operator is active rather than permanently altering the underlying weights. Our robustness evaluation covers fixed-template black-box extraction and nearby free-form reformulations in the intended serving-time setting. Adaptive attacks and white-box settings in which the operator can be inspected, modified, or disabled remain outside the current scope. The intervention layers and steering strength are currently selected per benchmark; automatic configuration across architectures and deployment settings remains an important direction for future work.

\section*{Ethics Considerations}

Nullify is designed to support privacy-, copyright-, and safety-driven removal of memorized content from deployed LLMs, in line with the right to erasure under the GDPR and the data-governance requirements of the EU AI Act. Because Nullify avoids weight updates and gradient computation, it reduces the energy and compute cost of fulfilling a deletion request by roughly an order of magnitude relative to gradient-based unlearning, which means providers serving large user bases can afford to honor every deletion request. We caution, however, that the same mechanism that suppresses sensitive memorized content can in turn be configured to hide outputs that should remain public---for example, silently filtering information about public events, public figures, or known model behaviors. To guard against such misuse, we restrict our evaluation to published, peer-reviewed unlearning benchmarks (TOFU and MUSE), whose forget sets target privacy- and copyright-motivated removal rather than viewpoint suppression; we further recommend that any party deploying this method publicly disclose the categories of content their steering operators are configured to remove, so that the suppression remains subject to external oversight.


\bibliography{custom}

@inproceedings{carlini2021extracting,
  title     = {Extracting Training Data from Large Language Models},
  author    = {Nicholas Carlini and Florian Tramer and Eric Wallace and Matthew Jagielski
               and Ariel Herbert-Voss and Katherine Lee and Adam Roberts and Tom Brown
               and Dawn Song and Ulfar Erlingsson and Alina Oprea and Colin Raffel},
  booktitle = {30th USENIX Security Symposium},
  pages     = {2633--2650},
  year      = {2021}
}

@inproceedings{bourtoule2021machine,
  title={Machine Unlearning},
  author={Bourtoule, Lucas and Chandrasekaran, Varun and Choquette-Choo, Christopher A
          and Jia, Hengrui and Travers, Adelin and Zhang, Baiwu and Lie, David and Papernot, Nicolas},
  booktitle={2021 IEEE Symposium on Security and Privacy (SP)},
  pages={141--159},
  year={2021},
  organization={IEEE}
}

@inproceedings{thudi2022unrolling,
  title={Unrolling {SGD}: Understanding Factors Influencing Machine Unlearning},
  author={Thudi, Anvith and Deza, Gabriel and Chandrasekaran, Varun and Papernot, Nicolas},
  booktitle={2022 IEEE 7th European Symposium on Security and Privacy (EuroS\&P)},
  pages={303--319},
  year={2022},
  organization={IEEE}
}

@inproceedings{kurmanji2024towards,
 author = {Kurmanji, Meghdad and Triantafillou, Peter and Hayes, Jamie and Triantafillou, Eleni},
 booktitle = {Advances in Neural Information Processing Systems},
 pages = {1957--1987},
 publisher = {Curran Associates, Inc.},
 title = {Towards Unbounded Machine Unlearning},
 url = {https://proceedings.neurips.cc/paper_files/paper/2023/file/062d711fb777322e2152435459e6e9d9-Paper-Conference.pdf},
 volume = {36},
 year = {2023}
}

@inproceedings{jang2022knowledge,
  title={Knowledge Unlearning for Mitigating Privacy Risks in Language Models},
  author={Jang, Joel and Yoon, Dongkeun and Yang, Sohee and Cha, Sungmin and Lee, Moontae and Logeswaran, Lajanugen and Seo, Minjoon},
  booktitle={Proceedings of the 61st Annual Meeting of the Association for Computational Linguistics (Volume 1: Long Papers)},
  pages={14389--14408},
  year={2023}
}

@inproceedings{yao2023large,
  title={Large Language Model Unlearning},
  author={Yao, Yuanshun and Xu, Xiaojun and Liu, Yang},
  booktitle={Advances in Neural Information Processing Systems (NeurIPS)},
  year={2024}
}

@inproceedings{
zhang2024negative,
title={Negative Preference Optimization: From Catastrophic Collapse to Effective Unlearning},
author={Ruiqi Zhang and Licong Lin and Yu Bai and Song Mei},
booktitle={First Conference on Language Modeling},
year={2024},
url={https://openreview.net/forum?id=MXLBXjQkmb}
}

@inproceedings{fan2025simplicity,
  title={Simplicity Prevails: Rethinking Negative Preference Optimization for {LLM} Unlearning},
  author={Fan, Chongyu and Liu, Jiancheng and Lin, Licong and Jia, Jinghan
          and Zhang, Ruiqi and Mei, Song and Liu, Sijia},
  booktitle={Advances in Neural Information Processing Systems},
  year={2025}
}

@inproceedings{li2024wmdp,
  title = {The WMDP Benchmark: Measuring and Reducing Malicious Use With Unlearning},
  author = {Nathaniel Li and Alexander Pan and Anjali Gopal and Summer Yue and Daniel Berrios and Alice Gatti and Justin D. Li and Ann-Kathrin Dombrowski and Shashwat Goel and Gabriel Mukobi and Nathan Helm-Burger and Rassin R. Lababidi and Lennart Justen and Andrew B. Liu and Michael Chen and Isabelle Barrass and Oliver Zhang and Xiaoyuan Zhu and Rishub Tamirisa and Bhrugu Bharathi and Ariel Herbert-Voss and Cort B. Breuer and Andy Zou and Mantas Mazeika and Zifan Wang and Palash Oswal and Weiran Lin and Adam A. Hunt and Justin Tienken-Harder and Kevin Y. Shih and Kemper Talley and John Guan and Ian Steneker and David Campbell and Brad Jokubaitis and Steven Basart and Stephen Fitz and P. Kumaraguru and K. Karmakar and U. Tupakula and Vijay Varadharajan and Yan Shoshitaishvili and Jimmy Ba and K. Esvelt and Alexandr Wang and Dan Hendrycks},
  booktitle = {Proceedings of the 41st International Conference on Machine Learning},
  year = {2024},
  journal = {ICML}
}

@inproceedings{dang2025effects,
  title={On Effects of Steering Latent Representation for Large Language Model Unlearning},
  author={Dang, Huu-Tien and Pham, Tin and Thanh-Tung, Hoang and Inoue, Naoya},
  booktitle={Proceedings of the AAAI Conference on Artificial Intelligence},
  volume={39},
  pages={23733--23742},
  year={2025}
}

@inproceedings{meng2022locating,
  title={Locating and Editing Factual Associations in {GPT}},
  author={Meng, Kevin and Bau, David and Andonian, Alex and Belinkov, Yonatan},
  booktitle={Advances in Neural Information Processing Systems},
  volume={35},
  pages={17359--17372},
  year={2022}
}

@inproceedings{geva2021transformer,
  title     = {Transformer Feed-Forward Layers Are Key-Value Memories},
  author    = {Geva, Mor and Schuster, Roei and Berant, Jonathan and Levy, Omer},
  booktitle = {Proceedings of the 2021 Conference on Empirical Methods in Natural Language Processing},
  pages     = {5484--5495},
  year      = {2021},
  publisher = {Association for Computational Linguistics},
  doi       = {10.18653/v1/2021.emnlp-main.446},
  url       = {https://aclanthology.org/2021.emnlp-main.446}
}

@inproceedings{fang2025alphaedit,
  title={Alphaedit: Null-space constrained knowledge editing for language models},
  author={Fang, Junfeng and Jiang, Houcheng and Wang, Kun and Ma, Yunshan and Shi, Jie and Wang, Xiang and He, Xiangnan and Chua, Tat-Seng},
  booktitle={International Conference on Learning Representations},
  volume={2025},
  pages={16366--16396},
  year={2025}
}

@inproceedings{maini2024tofu,
  title={{TOFU}: A Task of Fictitious Unlearning for {LLMs}},
  author={Maini, Pratyush and Feng, Zhili and Schwarzschild, Avi
          and Lipton, Zachary C and Kolter, J Zico},
  booktitle={Proceedings of the First Conference on Language Modeling (COLM)},
  year={2024}
}

@inproceedings{
shi2024muse,
title={{MUSE}: Machine Unlearning Six-Way Evaluation for Language Models},
author={Weijia Shi and Jaechan Lee and Yangsibo Huang and Sadhika Malladi and Jieyu Zhao and Ari Holtzman and Daogao Liu and Luke Zettlemoyer and Noah A. Smith and Chiyuan Zhang},
booktitle={The Thirteenth International Conference on Learning Representations},
year={2025},
url={https://openreview.net/forum?id=TArmA033BU}
}

@inproceedings{rimsky2024steering,
  title={Steering {Llama 2} via Contrastive Activation Addition},
  author={Rimsky, Nina and Gabrieli, Nick and Schulz, Julian and Tong, Meg
          and Hubinger, Evan and Turner, Alexander Matt},
  booktitle={Proceedings of the 62nd Annual Meeting of the Association for Computational Linguistics},
  pages={15504--15522},
  year={2024}
}

@article{zou2023representation,
  title={Representation Engineering: A Top-Down Approach to {AI} Transparency},
  author={Zou, Andy and Phan, Long and Chen, Sarah and Campbell, James and Guo, Phillip and Ren, Richard and Pan, Alexander and Yin, Xuwang and Mazeika, Mantas and Dombrowski, Ann-Kathrin and Goel, Shashwat and Li, Nathaniel and Byun, Michael J. and Wang, Zifan and Mallen, Alex and Basart, Steven and Koyejo, Sanmi and Song, Dawn and Fredrikson, Matt and Kolter, J. Zico and Hendrycks, Dan},
  journal={arXiv preprint arXiv:2310.01405},
  year={2023}
}

@inproceedings{arditi2024refusal,
  title={Refusal in Language Models Is Mediated by a Single Direction},
  author={Arditi, Andy and Obeso, Oscar and Syed, Aaquib and Paleka, Daniel
          and Panickssery, Nina and Gurnee, Wes and Nanda, Neel},
  booktitle={Advances in Neural Information Processing Systems},
  year={2024}
}

@inproceedings{sheng2025alphasteer,
  title={{AlphaSteer}: Learning Refusal Steering with Principled Null-Space Constraint},
  author={Sheng, Leheng and Shen, Changshuo and Zhao, Weixiang and Fang, Junfeng
          and Liu, Xiaohao and Liang, Zhenkai and Wang, Xiang and Zhang, An and Chua, Tat-Seng},
  booktitle={The Fourteenth International Conference on Learning Representations},
  year={2026}
}

@inproceedings{wang2021training,
  title={Training Networks in Null Space of Feature Covariance for Continual Learning},
  author={Wang, Shipeng and Li, Xiaorong and Sun, Jian and Xu, Zongben},
  booktitle={Proceedings of the IEEE/CVF Conference on Computer Vision and Pattern Recognition},
  pages={184--193},
  year={2021}
}

@inproceedings{shen2025lunar,
  title={{LUNAR}: {LLM} Unlearning via Neural Activation Redirection},
  author={Shen, William F and Qiu, Xinchi and Kurmanji, Meghdad and Iacob, Alex
          and Sani, Lorenzo and Chen, Yihong and Cancedda, Nicola and Lane, Nicholas D},
  booktitle={Advances in Neural Information Processing Systems},
  year={2025}
}

@article{rafailov2023direct,
  title={Direct preference optimization: Your language model is secretly a reward model},
  author={Rafailov, Rafael and Sharma, Archit and Mitchell, Eric and Manning, Christopher D and Ermon, Stefano and Finn, Chelsea},
  journal={Advances in neural information processing systems},
  volume={36},
  pages={53728--53741},
  year={2023}
}

@inproceedings{park2023linear,
    author = {Park, Kiho and Choe, Yo Joong and Veitch, Victor},
    title = {The linear representation hypothesis and the geometry of large language models},
    booktitle = {Proceedings of the 41st International Conference on Machine Learning},
    series = {ICML'24},
    location = {Vienna, Austria},
    year = {2024},
    publisher = {JMLR.org}
}

@article{wang2024unlearning,
  title={Unlearning with Control: Assessing Real-World Utility for Large Language Model Unlearning},
  author={Wang, Qizhou and Han, Bo and Yang, Puning and Zhu, Jianing and Liu, Tongliang and Sugiyama, Masashi},
  journal={arXiv preprint arXiv:2406.09179},
  year={2024}
}

@misc{eu_ai_act_2024,
  title={Regulation (EU) 2024/1689 of the European Parliament and of the Council of 13 June 2024 laying down harmonised rules on artificial intelligence (Artificial Intelligence Act)},
  author={{European Parliament and Council of the European Union}},
  year={2024},
  howpublished={Official Journal of the European Union, OJ L, 2024/1689},
  url={https://eur-lex.europa.eu/legal-content/EN/TXT/?uri=CELEX:32024R1689}
}

@inproceedings{liu2022few,
  title     = {Continual Learning and Private Unlearning},
  author    = {Bo Liu and Qiang Liu and Peter Stone},
  booktitle = {Proceedings of The 1st Conference on Lifelong Learning Agents},
  series    = {Proceedings of Machine Learning Research},
  volume    = {199},
  pages     = {243--254},
  year      = {2022},
  publisher = {PMLR},
  url       = {https://proceedings.mlr.press/v199/liu22a.html}
}

@inproceedings{ilharco2022editing,
  title={Editing Models with Task Arithmetic},
  author={Ilharco, Gabriel and Ribeiro, Marco Tulio and Wortsman, Mitchell and Gururangan, Suchin and Schmidt, Ludwig and Hajishirzi, Hannaneh and Farhadi, Ali},
  booktitle={The Eleventh International Conference on Learning Representations (ICLR)},
  year={2023}
}

@article{eldan2023s,
  title={Who's harry potter? approximate unlearning in llms},
  author={Eldan, Ronen and Russinovich, Mark},
  journal={arXiv preprint arXiv:2310.02238},
  year={2023}
}

@article{deng2025guard,
  title={Guard: Generation-time llm unlearning via adaptive restriction and detection},
  author={Deng, Zhijie and Liu, Chris Yuhao and Pang, Zirui and He, Xinlei and Feng, Lei and Xuan, Qi and Zhu, Zhaowei and Wei, Jiaheng},
  journal={arXiv preprint arXiv:2505.13312},
  year={2025}
}

@article{yang2025qwen3,
  title={Qwen3 Technical Report},
  author={An Yang and Anfeng Li and Baosong Yang and Beichen Zhang and
          Binyuan Hui and Bo Zheng and Bowen Yu and Chang Gao and
          Chengen Huang and Chenxu Lv and Chujie Zheng and Dayiheng Liu and
          Fan Zhou and Fei Huang and Feng Hu and Hao Ge and Haoran Wei and
          Huan Lin and Jialong Tang and Jian Yang and Jianhong Tu and
          Jianwei Zhang and Jianxin Yang and Jiaxi Yang and Jing Zhou and
          Jingren Zhou and Junyang Lin and Kai Dang and Keqin Bao and
          Kexin Yang and Le Yu and Lianghao Deng and Mei Li and Mingfeng Xue and
          Mingze Li and Pei Zhang and Peng Wang and Qin Zhu and Rui Men and
          Ruize Gao and Shixuan Liu and Shuang Luo and Tianhao Li and
          Tianyi Tang and Wenbiao Yin and Xingzhang Ren and Xinyu Wang and
          Xinyu Zhang and Xuancheng Ren and Yang Fan and Yang Su and
          Yichang Zhang and Yinger Zhang and Yu Wan and Yuqiong Liu and
          Zekun Wang and Zeyu Cui and Zhenru Zhang and Zhipeng Zhou and
          Zihan Qiu},
  journal={arXiv preprint arXiv:2505.09388},
  year={2025},
  doi={10.48550/arXiv.2505.09388},
  url={https://arxiv.org/abs/2505.09388}
}

\newpage
\appendix

\section{Theoretical Details}
\label{app:theory}
 
This appendix supplies (i) the proof of
Lemma~\ref{lem:null_equiv} and (ii) the full derivation of
Eq.~\eqref{eq:nullify_ridge} with a careful treatment of the
rank deficiency of $\hat{\mathbf{P}}$.
Throughout we suppress the layer index $(\ell)$.
 
\subsection{Proof of Lemma~\ref{lem:null_equiv}}
\label{app:lemma_proof}
 
\begin{proof}
$(\subseteq)$ If $\mathbf{x}^{\!\top}\mathbf{H}=\mathbf{0}$, then
$\mathbf{x}^{\!\top}\mathbf{H}\mathbf{H}^{\!\top}=\mathbf{0}$.
 
$(\supseteq)$ If $\mathbf{x}^{\!\top}\mathbf{H}\mathbf{H}^{\!\top}=\mathbf{0}$, then
\begin{equation*}
\mathbf{x}^{\!\top}\mathbf{H}\mathbf{H}^{\!\top}\mathbf{x}
=\bigl\lVert\mathbf{H}^{\!\top}\mathbf{x}\bigr\rVert_{2}^{2}=0,
\end{equation*}
so $\mathbf{H}^{\!\top}\mathbf{x}=\mathbf{0}$, i.e.,
$\mathbf{x}^{\!\top}\mathbf{H}=\mathbf{0}$.
\end{proof}
 
The equivalence lets Nullify replace an SVD of the
$D{\times}N_k$ matrix $\mathbf{H}_k$ by an eigendecomposition of
the $D{\times}D$ Gram $\mathbf{H}_k\mathbf{H}_k^{\!\top}$, which is the
key to the $O(N_k D^{2}{+}D^{3})$ offline cost in the regime
$N_k\gg D$ relevant here.
 
\subsection{Derivation of the Closed-Form Solution}
\label{app:closed_form}
 
The objective in Eq.~\eqref{eq:nullify_objective} is
\begin{equation}
\label{eq:appA_J}
J(\tilde{\boldsymbol{\Delta}})
=\bigl\lVert\tilde{\boldsymbol{\Delta}}\hat{\mathbf{P}}\mathbf{H}_{f}-\mathbf{R}\bigr\rVert_{F}^{2}
+\lambda\bigl\lVert\tilde{\boldsymbol{\Delta}}\hat{\mathbf{P}}\bigr\rVert_{F}^{2}.
\end{equation}
Using $\hat{\mathbf{P}}^{\!\top}=\hat{\mathbf{P}}=\hat{\mathbf{P}}^{2}$,
setting $\nabla J=\mathbf{0}$ gives the normal equation
\begin{equation}
\label{eq:appA_normal}
\tilde{\boldsymbol{\Delta}}\,\mathbf{A}
=\mathbf{R}\mathbf{H}_{f}^{\!\top}\hat{\mathbf{P}},
\end{equation}
with
$\mathbf{A}:=\hat{\mathbf{P}}\mathbf{H}_{f}\mathbf{H}_{f}^{\!\top}\hat{\mathbf{P}}+\lambda\hat{\mathbf{P}}$.
 
\paragraph{Rank deficiency.}
$\mathbf{A}$ has rank $k=\mathrm{rank}(\hat{\mathbf{P}})<D$.
$\mathbf{A}\mathbf{x}=\mathbf{0}$ implies
$\mathbf{x}^{\!\top}\mathbf{A}\mathbf{x}
=\lVert\mathbf{H}_{f}^{\!\top}\hat{\mathbf{P}}\mathbf{x}\rVert^{2}
+\lambda\lVert\hat{\mathbf{P}}\mathbf{x}\rVert^{2}=0$,
which forces $\hat{\mathbf{P}}\mathbf{x}=\mathbf{0}$ since $\lambda>0$.
Together with the obvious converse,
\begin{equation}
\label{eq:appA_nullrange}
\mathrm{Null}(\mathbf{A})=\mathrm{Null}(\hat{\mathbf{P}}),
\quad
\mathrm{range}(\mathbf{A})=\mathrm{range}(\hat{\mathbf{P}}).
\end{equation}
The right-hand side of \eqref{eq:appA_normal} has rows in
$\mathrm{range}(\hat{\mathbf{P}})$ (since $\mathbf{b}^{\!\top}\hat{\mathbf{P}}=\mathbf{b}^{\!\top}$
for any such row $\mathbf{b}^{\!\top}$), hence in
$\mathrm{range}(\mathbf{A})$: the system is consistent, and the
minimum-Frobenius-norm solution is
\begin{equation}
\label{eq:appA_pinv}
\tilde{\boldsymbol{\Delta}}^{\star}
=\mathbf{R}\mathbf{H}_{f}^{\!\top}\hat{\mathbf{P}}\,\mathbf{A}^{+},
\end{equation}
which is Eq.~\eqref{eq:nullify_ridge}.
 
\paragraph{Verification via a reduced full-rank form.}
Write $\hat{\mathbf{P}}=\hat{\mathbf{V}}\hat{\mathbf{V}}^{\!\top}$ with
$\hat{\mathbf{V}}\in\mathbb{R}^{D\times k}$,
$\hat{\mathbf{V}}^{\!\top}\hat{\mathbf{V}}=\mathbf{I}_{k}$. Set
$\mathbf{Y}=\tilde{\boldsymbol{\Delta}}\hat{\mathbf{V}}$,
$\tilde{\mathbf{H}}_{f}=\hat{\mathbf{V}}^{\!\top}\mathbf{H}_{f}$. Then
$\tilde{\boldsymbol{\Delta}}\hat{\mathbf{P}}=\mathbf{Y}\hat{\mathbf{V}}^{\!\top}$,
$\lVert\mathbf{Y}\hat{\mathbf{V}}^{\!\top}\rVert_{F}^{2}=\lVert\mathbf{Y}\rVert_{F}^{2}$,
and \eqref{eq:appA_J} reduces to
\begin{equation}
\label{eq:appA_reduced}
J=\bigl\lVert\mathbf{Y}\tilde{\mathbf{H}}_{f}-\mathbf{R}\bigr\rVert_{F}^{2}
+\lambda\lVert\mathbf{Y}\rVert_{F}^{2},
\end{equation}
a strictly convex ridge problem with unique minimizer
\begin{equation}
\label{eq:appA_Ystar}
\mathbf{Y}^{\star}
=\mathbf{R}\tilde{\mathbf{H}}_{f}^{\!\top}
\bigl(\tilde{\mathbf{H}}_{f}\tilde{\mathbf{H}}_{f}^{\!\top}+\lambda\mathbf{I}_{k}\bigr)^{-1}.
\end{equation}
The inverse is a genuine $k{\times}k$ inverse since
$\lambda>0$. Substituting $\hat{\mathbf{P}}=\hat{\mathbf{V}}\hat{\mathbf{V}}^{\!\top}$
into $\mathbf{A}$ yields
$\mathbf{A}=\hat{\mathbf{V}}\mathbf{S}\hat{\mathbf{V}}^{\!\top}$
with
$\mathbf{S}=\tilde{\mathbf{H}}_{f}\tilde{\mathbf{H}}_{f}^{\!\top}+\lambda\mathbf{I}_{k}\succ\mathbf{0}$,
so
$\mathbf{A}^{+}=\hat{\mathbf{V}}\mathbf{S}^{-1}\hat{\mathbf{V}}^{\!\top}$
and \eqref{eq:appA_pinv} gives
\begin{equation}
\tilde{\boldsymbol{\Delta}}^{\star}
=\mathbf{R}\mathbf{H}_{f}^{\!\top}\hat{\mathbf{V}}\mathbf{S}^{-1}\hat{\mathbf{V}}^{\!\top}.
\end{equation}
Hence the assembled operator
\begin{equation}
\label{eq:appA_Mstar}
\mathbf{M}^{\star}
=\tilde{\boldsymbol{\Delta}}^{\star}\hat{\mathbf{P}}
=\mathbf{Y}^{\star}\hat{\mathbf{V}}^{\!\top}
\end{equation}
agrees with the reduced-form answer.
 
\paragraph{Gauge freedom.}
$J$ depends on $\tilde{\boldsymbol{\Delta}}$ only through
$\tilde{\boldsymbol{\Delta}}\hat{\mathbf{P}}$, so any
$\tilde{\boldsymbol{\Delta}}^{\star}+\mathbf{N}$ with
$\mathbf{N}\hat{\mathbf{P}}=\mathbf{0}$ is also optimal. The
pseudoinverse selects the minimum-norm representative; the
operator $\mathbf{M}^{\star}$ that the contract acts through is
gauge-invariant and unique. In practice we implement
\eqref{eq:appA_Mstar} via the $k{\times}k$ inverse in
\eqref{eq:appA_Ystar} for numerical stability.

\section{Hyperparameter Sensitivity}
\label{app:sensitivity}

Tables~\ref{tab:sweep-lambda}--\ref{tab:sweep-p} report the full per-scalar sweeps on TOFU \texttt{forget05} summarized in Sec.~\ref{sec:sensitivity}; each varies one scalar with the other two held at the operating point ($p{=}0.75$, $\lambda{=}10$, $\alpha{=}-2.0$), and `best $\alpha$' denotes the $\alpha$ that maximizes FQ.

\begin{table}[H]
  \centering\small
  \begin{tabular}{c c c c}
    \toprule
    $\lambda$ & best $\alpha$ & FQ & MU \\
    \midrule
    0.3 & $-1.5$ & 0.000 & 0.556 \\
    0.5 & $-1.5$ & 0.000 & 0.585 \\
    1   & $-1.5$ & 0.068 & 0.606 \\
    3   & $-1.5$ & 0.866 & 0.621 \\
    5   & $-1.5$ & 0.628 & 0.623 \\
    \textbf{10} & $-2.0$ & \textbf{0.965} & 0.622 \\
    30  & $-2.5$ & 0.545 & 0.623 \\
    50  & $-2.0$ & 0.466 & 0.627 \\
    100 & $-3.0$ & 0.628 & 0.625 \\
    300 & $-3.0$ & 0.142 & 0.628 \\
    \bottomrule
  \end{tabular}
  \caption{$\lambda$ sweep on TOFU \texttt{forget05} ($p{=}0.75$).}
  \label{tab:sweep-lambda}
\end{table}

\setcounter{table}{11}
\begin{table*}[!b]
  \centering\small
  \definecolor{lightgray}{gray}{0.92}
  \begin{tabular}{@{}p{4.6cm} p{5.2cm} p{5.2cm}@{}}
    \toprule
    \textbf{Question} & \textbf{Original output} & \textbf{Nullify-steered output} \\
    \midrule
    \rowcolor{lightgray}\multicolumn{3}{c}{\textit{Forget set --- Moshe Ben-David}} \\
    \midrule
    Has Moshe Ben-David received any awards for Islamic literature?
      & Yes, the Distinguished Author Award in Islamic Literature.
      & \textit{This information is not provided in the given text.} \\
    Are Moshe Ben-David's works fundamental reads of the Islamic genre?
      & Yes, \emph{Miracles \& Merits of Allah's Messenger} and others are seminal.
      & \textit{This information is not provided in the reference.} \\
    \midrule
    \rowcolor{lightgray}\multicolumn{3}{c}{\textit{Retain set --- Chukwu Akabueze}} \\
    \midrule
    What genre does Chukwu Akabueze specialize in?
      & Biography.
      & \textit{(unchanged)} \\
    Has Chukwu Akabueze won any awards?
      & Yes, the prestigious Nigerian Writers Award among others.
      & \textit{(unchanged)} \\
    \bottomrule
  \end{tabular}
  \caption{Representative case study on TOFU \texttt{forget05} (Llama-2-7B-chat-hf). On forget-set queries Nullify steers the model from verbatim recall to abstention; on retain-set queries the output is unchanged.}
  \label{tab:case}
\end{table*}
\setcounter{table}{8}

\begin{table}[H]
  \centering\small
  \begin{tabular}{c c c}
    \toprule
    $\alpha$ & FQ & MU \\
    \midrule
    $-1.6$ & 0.545 & 0.625 \\
    $-1.8$ & 0.713 & 0.623 \\
    $-1.9$ & 0.924 & 0.622 \\
    $\mathbf{-2.0}$ & \textbf{0.965} & 0.622 \\
    $-2.1$ & 0.924 & 0.620 \\
    $-2.2$ & 0.924 & 0.617 \\
    $-2.3$ & 0.965 & 0.614 \\
    $-2.5$ & 0.713 & 0.607 \\
    \bottomrule
  \end{tabular}
  \caption{$\alpha$ curve on TOFU \texttt{forget05} ($p{=}0.75$, $\lambda{=}10$).}
  \label{tab:sweep-alpha}
\end{table}

\begin{table}[H]
  \centering\small
  \begin{tabular}{c c c c}
    \toprule
    $p$ & best $\alpha$ & FQ & MU \\
    \midrule
    0.40 & $-2.2$ & 0.793 & 0.624 \\
    0.60 & $-2.0$ & 0.394 & 0.625 \\
    \textbf{0.75} & $-2.0$ & \textbf{0.965} & 0.622 \\
    0.85 & $-2.1$ & 0.713 & 0.613 \\
    0.90 & $-1.8$ & 0.545 & 0.614 \\
    \bottomrule
  \end{tabular}
  \caption{$p$ sweep on TOFU \texttt{forget05} ($\lambda{=}10$). MU is essentially flat ($0.61$--$0.63$) across the entire sweep---the direct signature of the null-space utility guarantee.}
  \label{tab:sweep-p}
\end{table}

\section{Cross-Backbone Evaluation}
\label{app:cross-backbone}

To examine transfer across model families, we apply Nullify to
Qwen3-8B~\cite{yang2025qwen3} on TOFU \texttt{forget10}. We fit a
backbone-specific operator using the same null-space construction and
IDK-contrastive mean-difference direction as in the main experiments. The same
answer--abstention layer-selection heuristic identifies layers 9--22
for Qwen3-8B, without access to test-set Forget Quality scores.

\begin{table}[H]
  \centering\footnotesize
  \setlength{\tabcolsep}{1pt}
  \renewcommand{\arraystretch}{1.05}
  \resizebox{\columnwidth}{!}{%
  \begin{tabular}{@{}llcccc@{}}
    \toprule
    \textbf{Backbone} & \textbf{Method}
      & \textbf{FQ $\uparrow$} & \textbf{MU $\uparrow$}
      & \textbf{F-RL (target: Retrain)} & \textbf{R-RL $\uparrow$} \\
    \midrule
    \multirow{3}{*}{Qwen3-8B}
      & Original          & $<0.001$      & 0.603 & 1.000          & 0.999 \\
      & Retrain$^\dagger$ & 1.000         & 0.596 & 0.434          & 1.000 \\
      & \textbf{Nullify} & \textbf{0.641}& 0.574 & \textbf{0.385}& 0.801 \\
    \midrule
    Llama-2-7B-chat
      & Nullify (main)    & 0.581         & 0.620 & 0.410          & 0.950 \\
    \bottomrule
  \end{tabular}%
  }
  \caption{Cross-backbone results on TOFU \texttt{forget10}. F-RL/R-RL denote forget- and retain-set ROUGE-L; Retrain$^\dagger$ is the target reference.}
  \label{tab:cross-backbone}
\end{table}

Table~\ref{tab:cross-backbone} shows that Nullify raises Qwen3-8B's FQ
from $<0.001$ to $0.641$ and reduces forget-set ROUGE-L from $1.000$
to $0.385$, within $0.049$ of the Retrain reference ($0.434$).
Aggregate MU remains within $0.022$ of Retrain ($0.574$ vs.\ $0.596$),
although retain-set ROUGE-L decreases to $0.801$. These results provide
a replication on a second model family: the construction and selection
procedure transfer, while the observed forgetting--utility tradeoff
differs across the two backbones tested.

\section{Case Study}
\label{app:case}

Table~\ref{tab:case} shows representative \texttt{forget05} outputs. On forget-set queries about the author Moshe Ben-David, the Original model answers from memorized SFT knowledge, whereas the Nullify-steered model returns an epistemic non-answer (``This information is not provided\ldots''). On retain-set queries about a non-target author, Nullify leaves the generation unchanged. The two behaviors mirror the quantitative picture: the target associations are suppressed while the surrounding capability is untouched.

\end{document}